\documentclass{article}




\usepackage[final]{neurips_2019}


\usepackage[utf8]{inputenc} 
\usepackage[T1]{fontenc}    
\usepackage{hyperref}       
\usepackage{url}            
\usepackage{booktabs}       
\usepackage{amsfonts}       
\usepackage{nicefrac}       
\usepackage{microtype}      
\usepackage{graphicx}
\usepackage{booktabs}
\graphicspath{ {./imgs/} }

\usepackage{algorithm}
\usepackage{algorithmic}
\usepackage{wrapfig}
\usepackage{amsthm}
\theoremstyle{definition}
\newtheorem{definition}{Definition}
\theoremstyle{theorem}
\newtheorem{theorem}{Theorem}
\theoremstyle{lemma}

\theoremstyle{remark}
\newtheorem{remark}{Remark}

\usepackage{xcolor}

\title{Visualizing and Measuring the Geometry of BERT}

%

\author{%
  Andy Coenen\thanks{Equal contribution}, Emily Reif\footnotemark[1], Ann Yuan\footnotemark[1]\\ 
  \textbf{Been Kim, Adam Pearce, Fernanda Viégas, Martin Wattenberg}\\
  Google Brain \\
  Cambridge, MA\\
  \texttt{\{andycoenen,ereif,annyuan,beenkim,adampearce,viegas,wattenberg\}@google.com}
}

\begin{document}

\maketitle

\begin{abstract}

Transformer architectures show significant promise for natural language processing. Given that a single pretrained model can be fine-tuned to perform well on many different tasks, these networks appear to extract generally useful linguistic features. How do such networks represent this information internally? This paper describes qualitative and quantitative investigations of one particularly effective model, BERT. At a high level, linguistic features seem to be represented in separate semantic and syntactic subspaces. We find evidence of a fine-grained geometric representation of word senses. We also present empirical descriptions of syntactic representations in both attention matrices and individual word embeddings, as well as a mathematical argument to explain the geometry of these representations.

\end{abstract}

\section{Introduction}

Neural networks for language processing have advanced rapidly in recent years. A key breakthrough was the introduction of transformer architectures  \cite{vaswani2017attention}. One recent system based on this idea, BERT \cite{devlin2018bert}, has proven to be extremely flexible: a single pretrained model can be fine-tuned to achieve state-of-the-art performance on a wide variety of NLP applications. This suggests the model is extracting a set of generally useful features from raw text. It is natural to ask, which features are extracted? And how is this information represented internally? 

Similar questions have arisen with other types of neural nets. Investigations of convolutional neural networks \cite{lecun1995convolutional, krizhevsky2012imagenet} have shown how representations change from layer to layer \cite{zeiler2014visualizing} ; how individual units in a network may have meaning \cite{carter2019activation}; and that ``meaningful'' directions exist in the space of internal activations \cite{kim2017interpretability}. These explorations have led to a broader understanding of network behavior.

Analyses on language-processing models (e.g., \cite{blevins2018deep, hewittstructural, linzen2016assessing, peters2018deep, tenney2018you}) point to the existence of similarly rich internal representations of linguistic structure. Syntactic features seem to be extracted by RNNs (e.g., \cite{blevins2018deep, linzen2016assessing}) as well as in BERT \cite{tenney2018you, tenney2019bert, liu2019linguistic, peters2018deep}. Inspirational work from Hewitt and Manning \cite{hewittstructural} found evidence of a geometric representation of entire parse trees in BERT's activation space.

Our work extends these explorations of the geometry of internal representations. Investigating how BERT represents syntax, we describe evidence that attention matrices contain grammatical representations. We also provide mathematical arguments that may explain the particular form of the parse tree embeddings described in \cite{hewittstructural}. Turning to semantics, using visualizations of the activations created by different pieces of text, we show suggestive evidence that BERT distinguishes word senses at a very fine level. Moreover, much of this semantic information appears to be encoded in a relatively low-dimensional subspace.

\section{Context and related work}

Our object of study is the BERT model introduced in  \cite{devlin2018bert}. To set context and terminology, we briefly describe the model's architecture. The input to BERT is based on a sequence of tokens (words or pieces of words). 
The output is a sequence of vectors, one for each input token. We will often refer to these vectors as \textit{context embeddings} because they include information about a token's context. 

BERT's internals consist of two parts. First, an initial embedding for each token is created by combining a pre-trained wordpiece embedding with position and segment information. 
Next, this initial sequence of embeddings is run through multiple transformer layers, producing a new sequence of context embeddings at each step. (BERT comes in two versions, a 12-layer BERT-base model and a 24-layer  BERT-large model.) Implicit in each transformer layer is a set of \textit{attention matrices}, one for each attention head, each of which contains a scalar value for each ordered pair $(token_i, token_j)$.

\subsection{Language representation by neural networks}

Sentences are sequences of discrete symbols, yet neural networks operate on continuous data--vectors in high-dimensional space. Clearly a successful network translates discrete input into some kind of geometric representation--but in what form? And which linguistic features are represented?

The influential Word2Vec system \cite{mikolov2013distributed}, for example, has been shown to place related words near each other in space, with certain directions in space correspond to semantic distinctions. Grammatical information such as number and tense are also represented via directions in space. Analyses of the internal states of RNN-based models have shown that they represent information about soft hierarchical syntax in a form that can be extracted by a one-hidden-layer network \cite{linzen2016assessing}. One investigation of full-sentence embeddings found a wide variety of syntactic properties could be extracted not just by an MLP, but by logistic regression \cite{conneau2018you}.

Several investigations have focused on transformer architectures. Experiments suggest context embeddings in BERT and related models  contain enough information to perform many tasks in the traditional ``NLP pipeline'' \cite{tenney2019bert}--tagging part-of-speech, co-reference resolution, dependency labeling, etc.--with simple classifiers (linear or small MLP models) \cite{tenney2018you, peters2018deep}. Qualitative, visualization-based work \cite{vig2019visualizing} suggests attention matrices may encode important relations between words.
    
A recent and fascinating discovery by Hewitt and Manning \cite{hewittstructural}, which motivates much of our work, is that BERT seems to create a direct representation of an entire dependency parse tree. The authors find that (after a single global linear transformation, which they term a ``structural probe'') the square of the distance between context embeddings is roughly proportional to tree distance in the dependency parse. They ask why squaring distance is necessary; we address this question in the next section.

The work cited above suggests that language-processing networks create a rich set of intermediate representations of both semantic and syntactic information. These results lead to two motivating questions for our research. Can we find other examples of intermediate representations? And, from a geometric perspective, how do all these different types of information coexist in a single vector?


\section{Geometry of syntax}

We begin by exploring BERT's internal representation of syntactic information. This line of inquiry builds on the work by Hewitt and Manning in two ways. First, we look beyond context embeddings to investigate whether attention matrices encode syntactic features. Second, we provide a simple mathematical analysis of the tree embeddings that they found.

\subsection{Attention probes and dependency representations}

As in \cite{hewittstructural}, we are interested in finding representations of dependency grammar relations \cite{de2006generating}. While \cite{hewittstructural} analyzed context embeddings, another natural place to look for encodings is in the attention matrices. After all, attention matrices are explicitly built on the relations between pairs of words.

\begin{figure}
 \includegraphics[width=\linewidth]{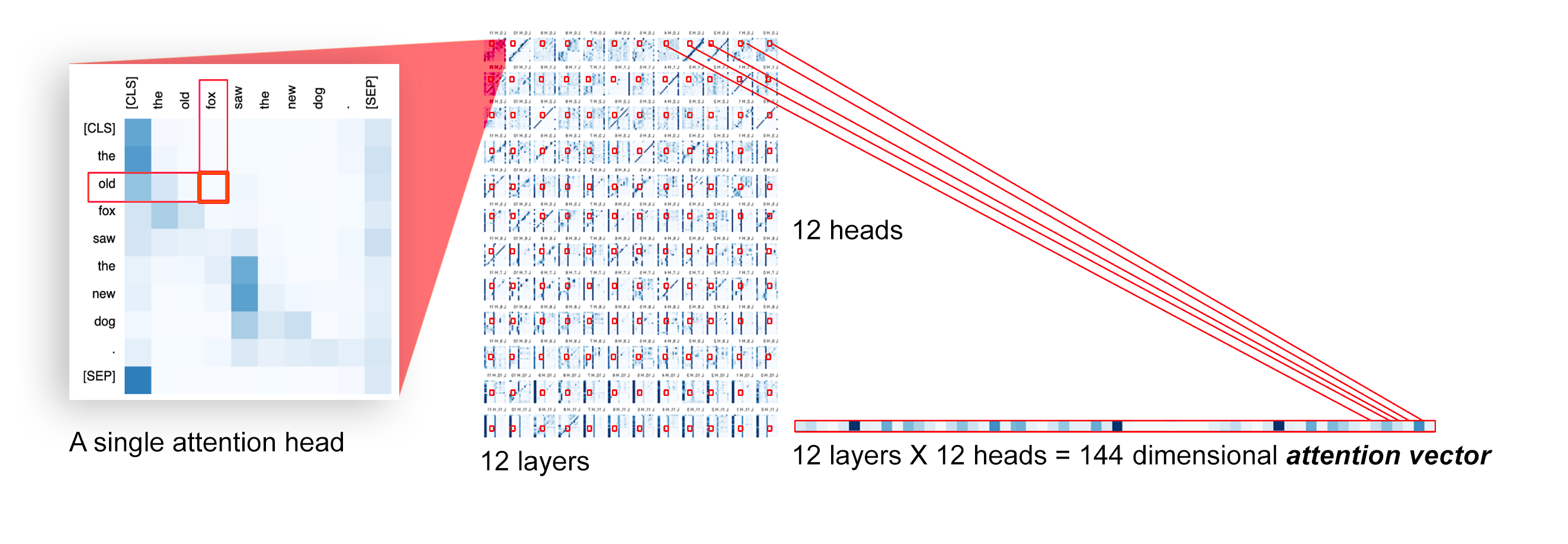}
 \caption{A \textit{model-wide attention vector} for an ordered pair of tokens contains the scalar attention values for that pair in all attention heads and layers. Shown: BERT-base.}
 \label{fig:attn_vec}
\end{figure}

To formalize what it means for attention matrices to encode linguistic features, we use an \textit{attention probe}, an analog of edge probing \cite{tenney2018you}. An attention probe is a task for a pair of tokens, $(token_i, token_j)$ where the input is a \textit{model-wide attention vector} formed by concatenating the entries $a_{ij}$ in every attention matrix from every attention head in every layer. The goal is to classify a given relation between the two tokens. If a linear model achieves reliable accuracy, it seems reasonable to say that the model-wide attention vector encodes that relation. We apply attention probes to the task of identifying the existence and type of dependency relation between two words. 

\subsubsection{Method}
\label{section:attention}
The data for our first experiment is a corpus of parsed sentences from the Penn Treebank  \cite{Marcus:1993:BLA:972470.972475}. This dataset has the constituency grammar for the sentences, which was translated to a dependency grammar using the PyStanfordDependencies library \cite{pyStanfordDeps}. The entirety of the Penn Treebank consists of 3.1 million dependency relations; we filtered this by using only examples of the 30 dependency relations with more than 5,000 examples in the data set. We then ran each sentence through BERT-base, and obtained the \textit{model-wide attention vector} (see Figure \ref{fig:attn_vec}) between every pair of tokens in the sentence, excluding the $[SEP]$ and $[CLS]$ tokens. This and subsequent experiments were conducted using PyTorch on MacBook machines.

With these labeled embeddings, we trained two L2 regularized linear classifiers via stochastic gradient descent, using \cite{scikit-learn}. The first of these probes was a simple linear binary classifier to predict whether or not an attention vector corresponds to the existence of a dependency relation between two tokens. This was trained with a balanced class split, and 30\% train/test split. The second probe was a multiclass classifier to predict which type of dependency relation exists between two tokens, given the dependency relation’s existence. This probe was trained with distributions outlined in table \ref{tab:dependency-classifier}.

\subsubsection{Results}
The binary probe achieved an accuracy of 85.8\%, and the multiclass probe achieved an accuracy of 71.9\%. Our real aim, again, is not to create a state-of-the-art parser, but to gauge whether model-wide attention vectors contain a relatively simple representation of syntactic features. The success of this simple linear probe suggests that syntactic information is in fact encoded in the attention vectors.


\subsection{Geometry of parse tree embeddings}

Hewitt and Manning's result that context embeddings represent dependency parse trees geometrically raises several questions. Is there a reason for the particular mathematical representation they found? Can we learn anything by visualizing these representations?

\subsubsection{Mathematics of embedding trees in Euclidean space}

Hewitt and Manning ask why parse tree distance seems to correspond specifically to the \textit{square} of Euclidean distance, and whether some other metric might do better \cite{hewittstructural}. We describe mathematical reasons why squared Euclidean distance may be natural.

First, one cannot generally embed a tree, with its tree metric $d$, isometrically into Euclidean space (Appendix~\ref{appendix:trees}). Since an isometric embedding is impossible, motivated by the results of \cite{hewittstructural} we might ask about other possible representations.

\begin{definition}[power-$p$ embedding]
Let $M$ be a metric space, with metric $d$. We say $f: M \to \mathbb{R}^n$ is a power-$p$ embedding if for all $x, y \in M$, we have $$||f(x) - f(y)||^p = d(x, y)$$
We will refer to the special case of a power-$2$ embedding as a \textit{Pythagorean embedding}.
\end{definition}

In these terms, we can say \cite{hewittstructural} found evidence of a Pythagorean embedding for parse trees. It turns out that Pythagorean embeddings of trees are especially simple. For one thing, it is easy to write down an explicit model--a mathematical idealization--for a Pythagorean embedding for any tree.

\begin{theorem}
\label{thm:embed}
Any tree with $n$ nodes has a Pythagorean embedding into $\mathbb{R}^{n-1}$.
\end{theorem}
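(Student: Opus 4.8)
The plan is to give an explicit construction and check it directly. A tree $T$ on $n$ nodes has exactly $n-1$ edges, so I would index an orthonormal basis $\{u_e : e \in E(T)\}$ of $\mathbb{R}^{n-1}$ by the edges of $T$. Fix any node $r$ as a root. For each node $v$, let $P(v) \subseteq E(T)$ be the set of edges on the unique $r$-to-$v$ path, and define
$$f(v) = \sum_{e \in P(v)} u_e,$$
so in particular $f(r) = 0$. The claim is that this $f$ is a Pythagorean embedding into $\mathbb{R}^{n-1}$.

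To verify it, I would use the elementary fact that in a tree the path between two nodes $x$ and $y$ consists precisely of the edges in the symmetric difference $P(x)\,\triangle\,P(y)$: it runs from $x$ up to the least common ancestor of $x$ and $y$ and then down to $y$, and these are exactly the edges lying in one of $P(x), P(y)$ but not the other. Hence $d(x,y) = |P(x)\,\triangle\,P(y)|$. Now $f(x) - f(y) = \sum_{e \in P(x)\setminus P(y)} u_e - \sum_{e \in P(y)\setminus P(x)} u_e$, and since the $u_e$ are orthonormal this gives $\|f(x) - f(y)\|^2 = |P(x)\setminus P(y)| + |P(y)\setminus P(x)| = |P(x)\,\triangle\,P(y)| = d(x,y)$, which is exactly the power-$2$ condition.

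An alternative, equivalent route is induction on $n$. For $n = 1$ embed the single node at the origin of $\mathbb{R}^0$. For the step, pick a leaf $v$ of $T$ with unique neighbor $u$, apply the inductive hypothesis to $T - v$ to get $f' : T - v \to \mathbb{R}^{n-2}$, then set $f(w) = (f'(w), 0)$ for $w \neq v$ and $f(v) = (f'(u), 1)$ in $\mathbb{R}^{n-1} = \mathbb{R}^{n-2}\times\mathbb{R}$; since every path to $v$ passes through $u$, one checks $\|f(w)-f(v)\|^2 = d_{T-v}(w,u)+1 = d_T(w,v)$ while all other distances are unchanged. I do not expect a genuine obstacle here: the only thing to get right is the combinatorial identity $d(x,y) = |P(x)\,\triangle\,P(y)|$ (equivalently, that attaching one fresh orthogonal coordinate per new leaf edge distorts no pairwise distance), which is immediate from uniqueness of paths in a tree. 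The dimension $n-1$ is then forced by the construction, matching the edge count.
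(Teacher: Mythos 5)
Your construction is the same as the paper's: assigning one orthonormal basis vector per edge and setting $f(v)$ to the sum over the root-to-$v$ path is exactly the paper's inductive definition $f(t_i) = e_i + f(\mathrm{parent}(t_i))$ unrolled, and your symmetric-difference computation is just a more explicit version of the paper's observation that $f(x)$ and $f(y)$ differ by $m$ mutually perpendicular unit steps. The proof is correct; the extra leaf-induction variant is fine but adds nothing beyond the first argument.
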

\begin{proof}
Let the nodes of the tree be $t_0, ..., t_{n-1}$, with $t_0$ being the root node. Let $\{e_1, ..., e_{n-1}\}$ be orthogonal unit basis vectors for $\mathbb{R}^{n-1}$. Inductively, define an embedding $f$ such that:

$$f(t_0) = 0$$
$$f(t_i) = e_i + f(parent(t_i))$$

Given two distinct tree nodes $x$ and $y$, where $m$ is the tree distance $d(x, y)$, it follows that we can move from $f(x)$ to $f(y)$ using $m$ mutually perpendicular unit steps. Thus 
$$||f(x) - f(y)||^2 = m = d(x, y)$$
\end{proof}

\begin{remark}

This embedding has a simple informal description: at each embedded vertex of the graph, all line segments to neighboring embedded vertices are unit-distance segments, orthogonal to each other and to every other edge segment. (It's even easy to write down a set of coordinates for each node.) By definition any two Pythagorean embeddings of the same tree are isometric; with that in mind, we refer to this as the \textit{canonical Pythagorean embedding}. (See \cite{maehara2013euclidean} for an independent version of this theorem.)

\end{remark}

In the proof of Theorem 1, instead of choosing basis vectors in advance, one can choose random unit vectors. Because two random vectors will be nearly orthogonal in high-dimensional space, the Pythagorean embedding condition will approximately hold. This means that in space that is sufficiently high-dimensional (compared to the size of the tree) it is possible to construct an approximate Pythagorean embedding with essentially ``local'' information, where a tree node is  connected to its children via random unit-length branches. We refer to this type of embedding as a \textit{random branch embedding}. (See  Appendix~\ref{appendix:ideal} for visualizations, and Appendix~\ref{appendix:trees} for mathematical detail.)

It is also worth noting that power-$p$ embeddings will not necessarily even exist when $p < 2$. (See Appendix~\ref{appendix:trees})

\begin{theorem}
\label{thm:impossible}
For any $p < 2$, there is a tree which has no power-$p$ embedding.
\end{theorem}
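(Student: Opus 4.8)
The plan is to show that, for each fixed $p<2$, a star graph with sufficiently many leaves admits no power-$p$ embedding into any Euclidean space. The required number of leaves will grow as $p\to 2$, which is consistent with the theorem only asking for a tree that may depend on $p$. Concretely, take $K_{1,k}$, the star with center $t_0$ and leaves $t_1,\dots,t_k$; its tree metric satisfies $d(t_0,t_i)=1$ and $d(t_i,t_j)=2$ for all $i\neq j$.

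Next I would assume a power-$p$ embedding $f\colon K_{1,k}\to\mathbb{R}^n$ exists and extract a contradiction from it. Since the defining condition only involves the differences $f(x)-f(y)$, one may translate so that $f(t_0)=0$; write $u_i=f(t_i)$. Then $\|u_i\|^p=1$ and $\|u_i-u_j\|^p=2$ force $\|u_i\|=1$ and $\|u_i-u_j\|=2^{1/p}$, and expanding $\|u_i-u_j\|^2=\|u_i\|^2-2\langle u_i,u_j\rangle+\|u_j\|^2$ gives the key identity
$$\langle u_i,u_j\rangle \;=\; 1-2^{2/p-1} \;=:\; c .$$
Because $p<2$ we have $2/p-1>0$, hence $c<0$: the images of the leaves would have to be unit vectors with a common \emph{negative} pairwise inner product.

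The final step is to rule this configuration out once $k$ is large, using the Gram matrix $G=\big(\langle u_i,u_j\rangle\big)_{1\le i,j\le k}$, which is necessarily positive semidefinite. Here $G=(1-c)I_k+cJ_k$ with $J_k$ the all-ones matrix, so the all-ones vector is an eigenvector with eigenvalue $1+(k-1)c$; since $c<0$, this eigenvalue is negative as soon as $k>1-1/c$, contradicting positive semidefiniteness. Choosing any integer $k>1-1/c$ therefore yields a star with no power-$p$ embedding, which proves the theorem.

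I do not expect a genuine computational obstacle here; the one thing that must be chosen carefully is the form of the obstruction. Working with positive semidefiniteness of the Gram matrix --- rather than with a bound on the number of vectors that can have pairwise negative inner products in a fixed dimension --- is what keeps the argument independent of the target dimension $n$, and this matters because the definition of power-$p$ embedding places no restriction on $n$. It is also worth recording why no single tree works for all $p<2$ simultaneously: as $p\to 2$, $c\to 0$ and the threshold $1-1/c\to\infty$, so the star must keep growing --- exactly matching the $p=2$ case of Theorem~\ref{thm:embed}, where every tree does embed.
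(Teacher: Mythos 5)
Your proof is correct and follows essentially the same route as the paper's: a star-shaped tree whose leaves would have to be sent to unit vectors with a common negative pairwise inner product, which is impossible once the number of leaves is large enough. The only differences are cosmetic --- you establish the obstruction directly from positive semidefiniteness of the Gram matrix rather than citing the folk bound $v_i\cdot v_j\ge -1/(k-1)$, and your computation of $c=1-2^{2/p-1}$ handles all $p<2$ uniformly where the paper treats $p\le 1$ as separate cases.
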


\begin{remark}

A result of Schoenberg \cite{schoenberg1937certain}, phrased in our terminology, is that if a metric space $X$ has a power-$p$ embedding into $\mathbb{R}^n$, then it also has a power-$q$ embedding for any $q > p$. Thus for $p > 2$ there will always be a power-$p$ embedding for any tree. Unlike the case of $p = 2$, we do not know of a simple way to describe the geometry of such an embedding.

The simplicity of Pythagorean tree embeddings, as well as the fact that they may be approximated by a simple random model, suggests they may be a generally useful alternative to approaches to tree embeddings that require hyperbolic geometry \cite{nickel2017poincare}.

\end{remark}

\subsubsection{Visualization of parse tree embeddings}

\begin{figure}
 \includegraphics[width=\linewidth]{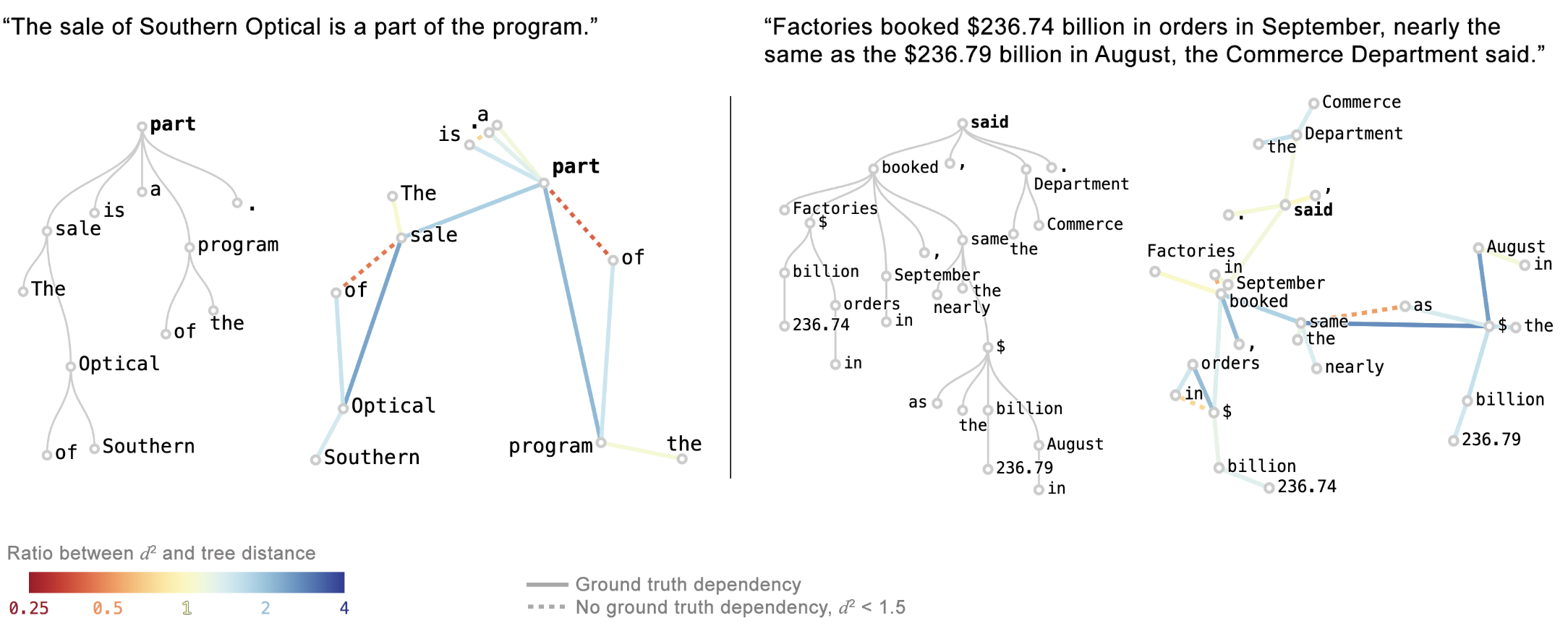}
 \caption{Visualizing embeddings of two sentences after applying the Hewitt-Manning probe. We compare the parse tree (left images) with a PCA projection of context embeddings (right images).}

 \label{fig:pca_trees}
\end{figure}

How do parse tree embeddings in BERT compare to exact power-2 embeddings? To explore this question, we created a simple visualization tool. The input to each visualization is a sentence from the Penn Treebank with associated dependency parse trees (see Section~\ref{section:attention}). We then extracted the token embeddings produced by BERT-large in layer 16 (following \cite{hewittstructural}), transformed by the Hewitt and Manning’s ``structural probe'' matrix $B$, yielding a set of points in 1024-dimensional space. We used PCA to project to two dimensions. (Other dimensionality-reduction methods, such as t-SNE and UMAP \cite{mcinnes2018umap}, were harder to interpret.)

To visualize the tree structure, we connected pairs of points representing words with a dependency relation. The color of each edge indicates the deviation from true tree distance. We also connected, with dotted line, pairs of words without a dependency relation but whose positions (before PCA) were far closer than expected. The resulting image lets us see both the overall shape of the tree embedding, and fine-grained information on deviation from a true power-2 embedding.

Two example visualizations are shown in Figure~\ref{fig:pca_trees}, next to traditional diagrams of their underlying parse trees. These are typical cases, illustrating some common patterns; for instance, prepositions are embedded unexpectedly close to words they relate to. (Figure~\ref{fig:many_trees} shows additional examples.)

\begin{figure}
 \includegraphics[width=\linewidth]{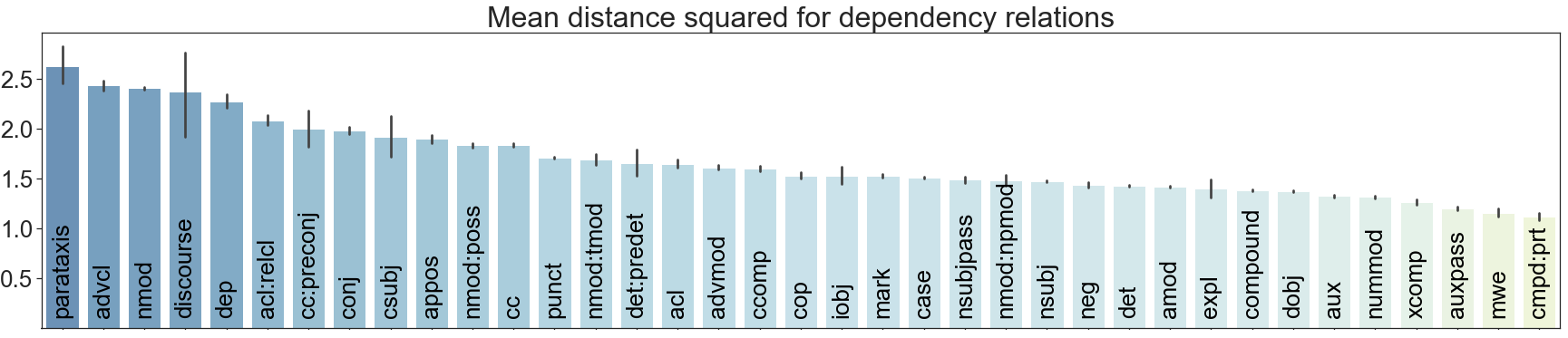}
 \caption{The average squared edge length between two words with a given dependency.}
 \label{fig:manning_dist}
\end{figure}

A natural question is whether the difference between these projected trees and the canonical ones is merely noise, or a more interesting pattern. By looking at the average embedding distances of each dependency relation (see Figure~\ref{fig:manning_dist}) , we can see that they vary widely from around 1.2 ($compound:prt$, $advcl$) to 2.5 ($mwe$, $parataxis$, $auxpass$). Such systematic differences suggest that BERT's syntactic representation has an additional quantitative aspect beyond traditional dependency grammar.

\section{Geometry of word senses}

BERT seems to have several ways of representing syntactic information. What about semantic features? Since  embeddings produced by transformer models depend on context, it is natural to speculate that they capture the particular shade of meaning of a word as used in a particular sentence. (E.g., is ``bark'' an animal noise or part of a tree?) We explored geometric representations of word sense both qualitatively and quantitatively.

\subsection{Visualization of word senses}

Our first experiment is an exploratory visualization of how word sense affects context embeddings. For data on different word senses, we collected all sentences used in the introductions to English-language Wikipedia articles. (Text outside of introductions was frequently fragmentary.) We created an interactive application, which we plan to make public. A user enters a word, and the system retrieves 1,000 sentences containing that word. It sends these sentences to BERT-base as input, and for each one it retrieves the context embedding for the word from a layer of the user's choosing.

The system visualizes these 1,000 context embeddings using UMAP \cite{mcinnes2018umap}, generally showing clear clusters relating to word senses. Different senses of a word are typically spatially separated, and within the clusters there is often further structure related to fine shades of meaning. In Figure~\ref{fig:lie_annotated}, for example, we not only see crisp, well-separated clusters for three meanings of the word ``die,'' but within one of these clusters there is a kind of quantitative scale, related to the number of people dying. See Appendix~\ref{appendix:word_senses} for further examples. The apparent detail in the clusters we visualized raises two immediate questions. First, is it possible to find quantitative corroboration that word senses are well-represented? Second, how can we resolve a seeming contradiction: in the previous section, we saw how position represented syntax; yet here we see position representing semantics.

\begin{figure}
 \includegraphics[width=\linewidth]{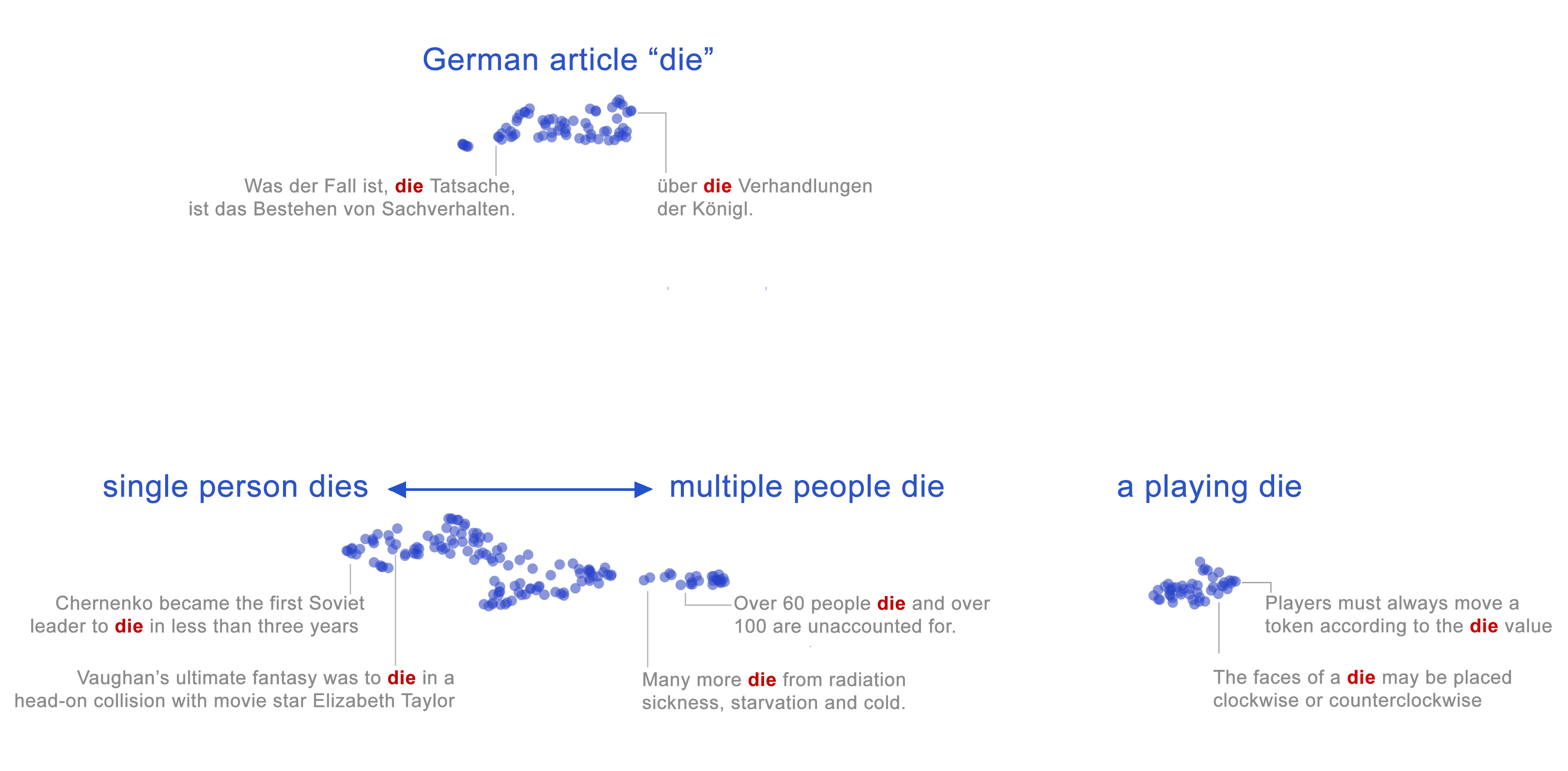}
 \caption{Embeddings for the word "die" in different contexts, visualized with UMAP. Sample points are annotated with corresponding sentences. Overall annotations (blue text) are added as a guide.}
 \label{fig:lie_annotated}
\end{figure}

\subsection{Measurement of word sense disambiguation capability}

The crisp clusters seen in visualizations such as Figure~\ref{fig:lie_annotated} suggest that BERT may create simple, effective internal representations of word senses, putting different meanings in different locations. To test this hypothesis quantitatively, we test whether a simple classifier on these internal representations can perform well at word-sense disambiguation (WSD).

We follow the procedure described in \cite{peters2018deep}, which performed a similar experiment with the ELMo model. For a given word with $n$ senses, we make a nearest-neighbor classifier where each neighbor is the centroid of a given word sense's BERT-base embeddings in the training data. To classify a new word we find the closest of these centroids, defaulting to the most commonly used sense if the word was not present in the training data. We used the data and evaluation from \cite{raganato-etal-2017-word}: the training data was SemCor \cite{Miller1993ASC} (33,362 senses), and the testing data was the suite described in \cite{raganato-etal-2017-word} (3,669 senses). 

The simple nearest-neighbor classifier achieves an F1 score of 71.1, higher than the current state of the art (Table~\ref{tab:wsd_tab:semanticprobe}), with the accuracy monotonically increasing through the layers. This is a strong signal that context embeddings are representing word-sense information. Additionally, an even higher score of 71.5 was obtained using the technique described in the following section. 


\begin{table}[h]
\centering
\begin{tabular}{@{}ccc@{}}
\toprule

Method                                  & F1 score \\ \midrule
Baseline (most frequent sense)          & 64.8 \\
ELMo \cite{peters2018deep}              & 70.1 \\
BERT                                    & \textbf{71.1} \\
BERT (w/ probe)                         & \textbf{71.5} \\\bottomrule
\end{tabular}
 \hspace{2em} 
\centering
\begin{tabular}{@{}ccc@{}}
\toprule
$m$        & Trained probe & Random probe \\ \midrule
768 (full) & 71.26         & 70.74        \\
512        & 71.52         & 70.51        \\ 
256        & 71.29         & 69.92        \\
128        & 71.21         & 69.56        \\ 
64         & 70.19         & 68.00        \\ 
32         & 68.01         & 64.62        \\
16         & 65.34         & 61.01        \\ \bottomrule
\end{tabular}
\caption{[Left] F1 scores for WSD task. [Right] Semantic probe \% accuracy on final-layer BERT-base}
\label{tab:wsd_tab:semanticprobe}
\end{table}


\subsubsection{An embedding subspace for word senses?}

We hypothesized that there might also exist a linear transformation under which distances between embeddings would better reflect their semantic relationships--that is, words of the same sense would be closer together and words of different senses would be further apart. 

To explore this hypothesis, we trained a probe following Hewitt and Manning's methodology. We initialized a random matrix $B\in{R}^{k\times m}$, testing different values for $m$. Loss is, roughly, defined as the difference between the average cosine similarity between embeddings of words with different senses, and that between embeddings of the same sense. However, we clamped the cosine similarity terms to within $\pm 0.1$ of the pre-training averages for same and different senses. (Without clamping, the trained matrix simply ended up taking well-separated clusters and separating them further. We tested values between $0.05$ and $0.2$ for the clamping range and $0.1$ had the best performance.)  




Our training corpus was the same dataset from 4.1.2., filtered to include only words with at least two senses, each with at least two occurrences (for 8,542 out of the original 33,362 senses). Embeddings came from BERT-base (12 layers, 768-dimensional embeddings). We evaluate our trained probes on the same dataset and WSD task used in 4.1.2 (Table \ref{tab:wsd_tab:semanticprobe}). As a control, we compare each trained probe against a random probe of the same shape. As mentioned in 4.1.2, untransformed BERT embeddings achieve a state-of-the-art accuracy rate of 71.1\%. We find that our trained probes are able to achieve slightly improved accuracy down to $m=128$.

Though our probe achieves only a modest improvement in accuracy for final-layer embeddings, we note that we were able to more dramatically improve the performance of embeddings at earlier layers (see Appendix for details: Figure~\ref{fig:semantic_probe_accuracy}). This suggests  there is more semantic information in the geometry of earlier-layer embeddings than a first glance might reveal.

Our results also support the idea that word sense information may be contained in a lower-dimensional space. This suggests a resolution to the seeming contradiction mentioned above: a vector encodes both syntax and semantics, but in separate complementary subspaces (see Appendix~\ref{appendix:compare_probes} for details).

\subsection{Embedding distance and context: a concatenation experiment}

If word sense is affected by context, and encoded by location in space, then we should be able to influence context embedding positions by systematically varying their context. To test this hypothesis, we performed an experiment based on a simple and controllable context change: concatenating sentences where the same word is used in different senses. 

\subsubsection{Method}

We picked 25,096 sentence pairs from SemCor, using the same keyword in different senses. E.g.:

\begin{quote}
A: "He thereupon \textit{went} to London and spent the winter talking to men of wealth."
    \textit{went}: to move from one place to another.

B: "He \textit{went} prone on his stomach, the better to pursue his examination." 
    \textit{went}: to enter into a specified state.
\end{quote}

We define a \textit{matching} and an \textit{opposing} sense centroid for each keyword. For sentence A, the matching sense centroid is the average embedding for all occurrences of ``\textit{went}'' used with sense A. A's opposing sense centroid is the average embedding for all occurrences of ``\textit{went}'' used with sense B. 

We gave each individual sentence in the pair to BERT-base and recorded the cosine similarity between the keyword embeddings and their matching sense centroids. We also recorded the similarity between the keyword embeddings and their opposing sense centroids. We call the ratio between the two similarities the \textit{individual similarity ratio}. Generally this ratio is greater than one, meaning that the context embedding for the keyword is closer to the matching centroid than the opposing one.

We joined each sentence pair with the word "and" to create a single new sentence.
We gave these concatenations to BERT and recorded the similarities between the keyword embeddings and their matching/opposing sense centroids. Their ratio is the \textit{concatenated similarity ratio}. 

\subsubsection{Results}

\begin{wrapfigure}{L}{0.5\textwidth}
\centering
   \includegraphics[width=0.9\linewidth]{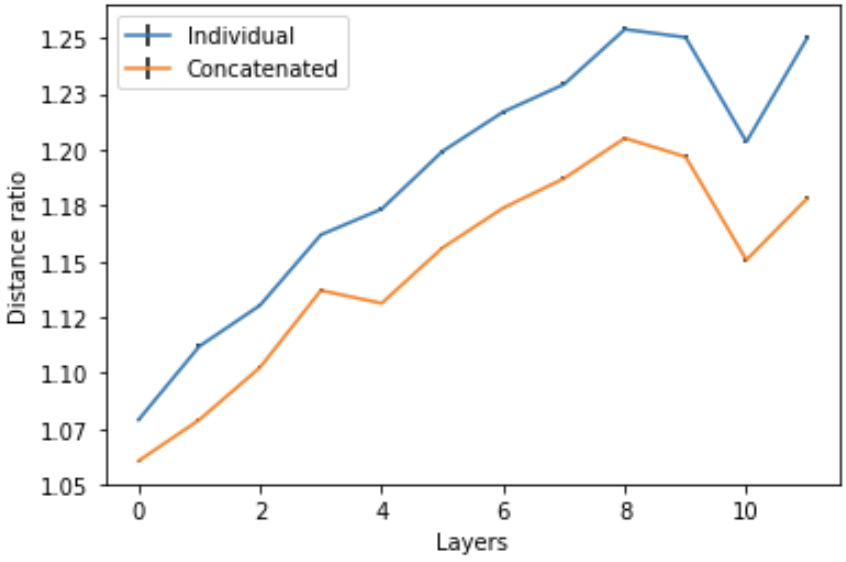}
  \caption{Average similarity ratio: senses A vs. B.}
  \label{fig:concatenation_ratios}
\end{wrapfigure}

Our hypothesis was that the keyword embeddings in the concatenated sentence would move towards their opposing sense centroids. Indeed, we found that the average individual similarity ratio was higher than the average concatenated similarity ratio at every layer (see Figure~\ref{fig:concatenation_ratios}). Concatenating a random sentence did not change the individual similarity ratios. If the ratio is less than one for any sentence, that means BERT has misclassified its keyword sense. We found that the misclassification rate was significantly higher for final-layer embeddings in the concatenated sentences compared to the individual sentences: 8.23\% versus 2.43\% respectively.

We also measured the effect of projecting final-layer keyword embeddings into the semantic subspace from 4.1.3. After multiplying each embedding by our trained semantic probe, we obtained an average concatenated similarity ratio of 1.58 and individual similarity ratio of 1.88, suggesting the transformed embeddings are closer to their matching sense centroids than the original embeddings (the original concatenated similarity ratio is 1.28 and the individual similarity ratio is 1.43). We also measured lower average misclassification rates for transformed embeddings: 7.31\% for concatenated sentences and 2.27\% for individual sentences.

Our results show how a token's embedding in a sentence may systematically differ from the embedding for the same token in the same sentence concatenated with a non-sequitur. This points to a potential failure mode for attention-based models: tokens do not necessarily respect semantic boundaries when attending to neighboring tokens, but rather indiscriminately absorb meaning from all neighbors.

\section{Conclusion and future work}

We have presented a series of experiments that shed light on BERT's internal representations of linguistic information. We have found evidence of syntactic representation in attention matrices, with certain directions in space representing particular dependency relations. We have also provided a mathematical justification for the squared-distance tree embedding found by Hewitt and Manning.

Meanwhile, we have shown that just as there are specific syntactic subspaces, there is evidence for subspaces that represent semantic information. We also have shown how mistakes in word sense disambiguation may correspond to changes in internal geometric representation of word meaning. Our experiments also suggest an answer to the question of how all these different representations fit together. We conjecture that the internal geometry of BERT may be broken into multiple linear subspaces, with separate spaces for different syntactic and semantic information.

Investigating this kind of decomposition is a natural direction for future research. What other meaningful subspaces exist? After all, there are many types of linguistic information that we have not looked for. 
A second important avenue of exploration is what the internal geometry can tell us about the specifics of the transformer architecture. Can an understanding of the geometry of internal representations help us find areas for improvement, or refine BERT's architecture?

\textbf{Acknowledgments:} We would like to thank David Belanger, Tolga Bolukbasi, Jasper Snoek, and Ian Tenney for helpful feedback and discussions.


\bibliographystyle{plain}
\bibliography{main}
\section{Appendix}

\subsection{Embedding trees in Euclidean space}
\label{appendix:trees}

Here we provide additional detail on the existence of various forms of tree embeddings.

Isometric embeddings of a tree (with its intrinsic tree metric) into Euclidean space are rare. Indeed, such an embedding is impossible even a four-point tree $T$, consisting of a root node $R$ with three children $C_1, C_2, C_3$. If $f:T \to \mathbb{R}^n$ is a tree isometry then 
$||f(R) - f(C_1)) || = ||f(R) - f(C_2)) || = 1$, and
$||f(C_1) - f(C_2)) || = 2$. It follows that $f(R)$, $f(C_1)$, $f(C_2)$ are collinear.  The same can be said of $f(R)$, $f(C_1)$, and $f(C_3)$, meaning that $||f(C_2) - f(C_3)|| = 0 \neq d(C_2, C_3)$.

Since this four-point tree cannot be embedded, it follows the only trees that \textit{can} be embedded are simply chains. 

Not only are isometric embeddings generally impossible, but power-$p$ embeddings may also be unavailable when $p < 2$, as the following argument shows. See \cite{maehara2013euclidean} for an independent alternative version.

\textbf{Proof of Theorem~\ref{thm:impossible}}
\begin{proof}
We covered the case of $p = 1$ above. When $p < 1$, even a tree of three points is impossible to embed without violating the triangle inequality. To handle the case when $1 < p < 2$, consider a ``star-shaped'' tree of one root node with $k$ children; without loss of generality, assume the root node is embedded at the origin. Then in any power-$p$ embedding the other vertices will be sent to unit vectors, and for each pair of these unit vectors we have $||v_i - v_j||^p = 2$.

On the other hand, a well-known folk theorem (e.g., see \cite{cstheory}) says that 
given $k$ unit vectors $v_1, ..., v_k$ at least one pair of distinct vectors has $v_i \cdot v_j \geq -1/(k - 1)$. By the law of cosines, it follows that $||v_i - v_j|| \leq \sqrt{2 + \frac{2}{k-1}}$. For any $p < 2$, there is a sufficiently large $k$ such that $||v_i - v_j||^p \leq (\sqrt{2 + \frac{2}{k-1}})^p = (2 + \frac{2}{k-1})^{p/2} < 2$. Thus for any $p < 2$ a  large enough star-shaped tree cannot have a power-$p$ embedding.
\end{proof}

\textbf{Random branch embeddings are probably approximately Pythagorean}

We can define a simple randomized tree embedding that turns out to be approximately Pythagorean.

\begin{definition}[Random branch embedding in $\mathbb{R}^d$]
Let $T$ be a tree with nodes $t_0, ..., t_{n-1}$, where $t_0$ is the root. Let $\{v_1, ..., v_{n-1}\}$ be i.i.d Gaussian vectors in $\mathbb{R}^d$, with $v_i \sim \mathcal{N}(0,\,I/d)$. A \textit{random branch embedding} for $T$ is a function $f$ such that:

$$f(t_0) = 0$$
$$f(t_i) = v_i + f(parent(t_i))$$

\end{definition}

\begin{theorem}
\label{thm:random}
Consider a random branch embedding for a tree $T$ in $\mathbb{R}^d$. If $x, y$ are nodes in $T$, with tree distance $d(x, y) = m$, then the distribution of $||f(x) - f(y)||^2$ has mean $m$ and standard deviation $\sqrt{m / 2k}$.
\end{theorem}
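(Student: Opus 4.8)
The plan is to compute the exact law of the vector $f(x)-f(y)$---it will turn out to be an isotropic Gaussian---and then read off the moments of its squared norm from standard facts about the $\chi^2$ distribution. Throughout I write $d$ for the ambient dimension (the quantity the statement calls $k$).

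First I would identify the law of $f(x)-f(y)$ directly from the recursive definition. Let $z$ be the least common ancestor of $x$ and $y$ in $T$. Telescoping $f(t_i)=v_i+f(\mathrm{parent}(t_i))$ along the two branches gives $f(x)-f(z)=\sum_{i\in A}v_i$ and $f(y)-f(z)=\sum_{j\in B}v_j$, where $A$ (resp.\ $B$) is the set of node-indices on the path from $x$ (resp.\ $y$) up to, but not including, $z$. Since distinct edges of a tree have distinct child nodes, $A$ and $B$ are disjoint, and $|A|+|B|$ equals the tree distance $m=d(x,y)$. Subtracting, $f(x)-f(y)=\sum_{i\in A}v_i-\sum_{j\in B}v_j$. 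Each $-v_j$ is again $\mathcal{N}(0,I/d)$ by symmetry of the Gaussian, and because $A\cap B=\emptyset$ the $m$ summands are mutually independent; hence $f(x)-f(y)$ is a sum of $m$ i.i.d.\ $\mathcal{N}(0,I/d)$ vectors, so $f(x)-f(y)\sim\mathcal{N}\left(0,\frac{m}{d}I\right)$.

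Next I would compute the moments of the squared norm. Writing $f(x)-f(y)=\sqrt{m/d}\,(G_1,\dots,G_d)$ with the $G_a$ i.i.d.\ standard normal, we have $\|f(x)-f(y)\|^2=\frac{m}{d}\sum_{a=1}^d G_a^2=\frac{m}{d}\,\chi^2_d$. Since $\chi^2_d$ has mean $d$ and variance $2d$, this gives $\mathbb{E}\,\|f(x)-f(y)\|^2=m$---the canonical Pythagorean relation holds in expectation---and $\mathrm{Var}\,\|f(x)-f(y)\|^2=\frac{m^2}{d^2}\cdot 2d=\frac{2m^2}{d}$. Equivalently the distance itself is $\|f(x)-f(y)\|=\sqrt{m/d}\,\chi_d$ with $\mathrm{Var}(\chi_d)=d-\big(\sqrt2\,\Gamma(\frac{d+1}{2})/\Gamma(\frac d2)\big)^2\to\frac12$, so $\mathrm{sd}\,\|f(x)-f(y)\|\approx\sqrt{m/2d}$; in either reading the relative fluctuation vanishes as $d\to\infty$, which is the ``approximately Pythagorean'' content.

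There is no serious obstacle here: the only step needing care is the combinatorial bookkeeping in the first step---verifying that the $x$--$y$ path contributes exactly $m$ distinct branch vectors and that their signs are immaterial by Gaussian symmetry---after which everything reduces to the textbook moments of $\chi^2_d$ (and of $\chi_d$). The one point a careful write-up should flag is whether the stated $\sqrt{m/2k}$ is the standard deviation of $\|f(x)-f(y)\|^2$ or of $\|f(x)-f(y)\|$: the computation above shows it matches the latter asymptotically, whereas $\|f(x)-f(y)\|^2$ itself has standard deviation $m\sqrt{2/d}$.
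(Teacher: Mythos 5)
Your approach is the same as the paper's: telescope the recursion to write $f(x)-f(y)$ as a sum of $m$ independent $\mathcal{N}(0,I/d)$ vectors (using Gaussian sign-symmetry for the branch below the common ancestor), conclude $f(x)-f(y)\sim\mathcal{N}(0,\tfrac{m}{d}I)$, and read off moments of the squared norm from the $\chi^2_d$ distribution. You carry out the combinatorial bookkeeping more explicitly than the paper does, and your moment computation is the correct one: $\|f(x)-f(y)\|^2 = \tfrac{m}{d}\chi^2_d$ has standard deviation $m\sqrt{2/d}$, not the $\sqrt{m/2k}$ stated in the theorem (where $k$ is evidently a typo for $d$). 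Your diagnosis of the discrepancy is also right --- $\sqrt{m/(2d)}$ is the large-$d$ standard deviation of the \emph{unsquared} distance $\|f(x)-f(y)\|$, so the stated theorem appears to conflate the fluctuation of the norm with that of the squared norm. Either way the relative fluctuation vanishes as $d\to\infty$, so the ``approximately Pythagorean'' conclusion stands, but your version of the constant is the one that should appear if the quantity of interest is $\|f(x)-f(y)\|^2$.
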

\begin{proof}
We closely follow the proof of Theorem~\ref{thm:embed}.
Given two distinct tree nodes $x$ and $y$, the difference $f(x) - f(y)$
may be viewed as the sum (or difference) of $m$ i.i.d Gaussian vectors, each with distribution $\mathcal{N}(0,\,I/d)$. By the standard theory of multivariate normal distributions, this sum itself has distribution $\mathcal{N}(0,\,mI/d)$. Thus the distribution of $||f(x) - f(y)||^2$ will have mean $m$ and standard deviation $\sqrt{m / 2k}$.
\end{proof}

\subsection{Ideal vs. actual parse tree embeddings}
\label{appendix:ideal}

\begin{figure}
 \includegraphics[width=0.45\linewidth]{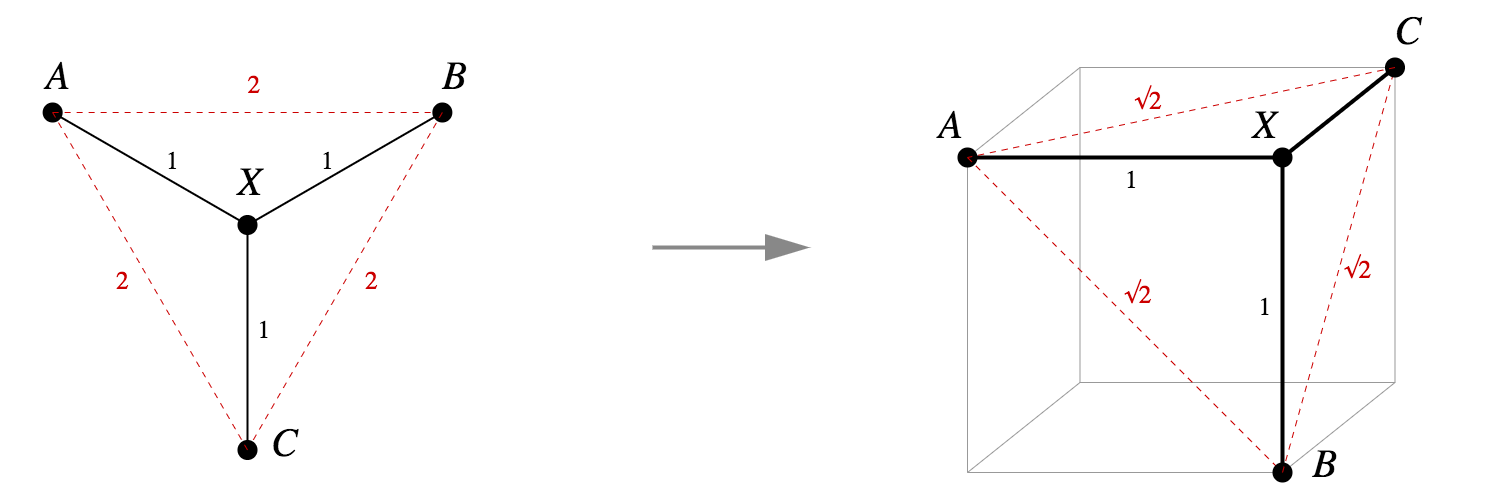}
 \hspace{0.09\linewidth}
 \includegraphics[width=0.45\linewidth]{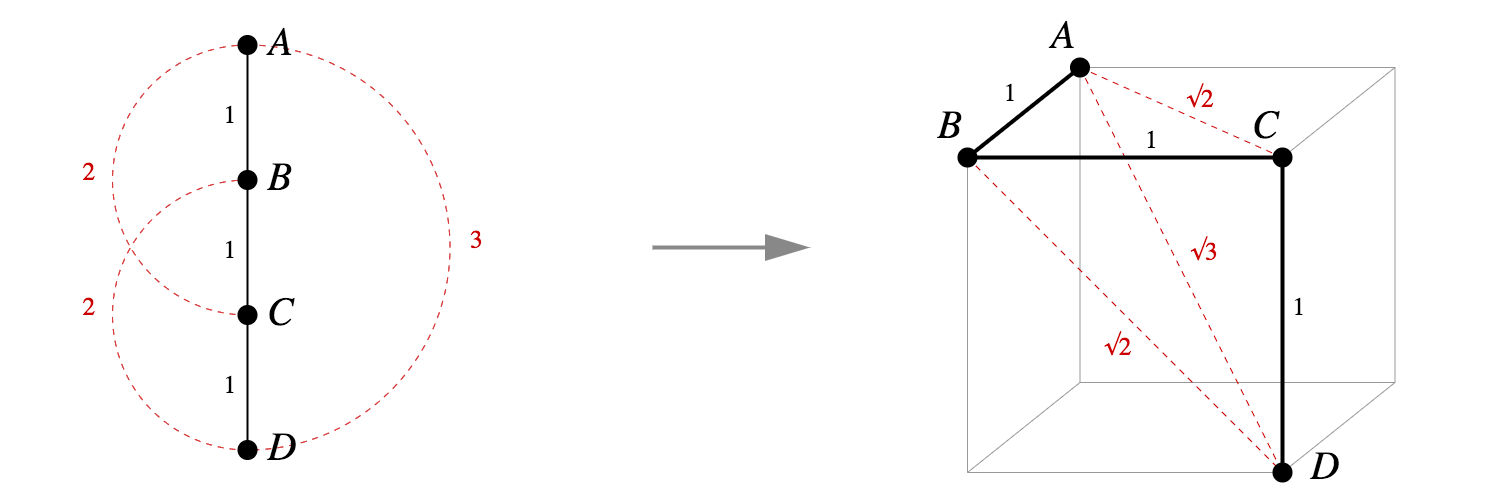}
 \caption{Examples of simple Pythagorean embeddings}

 \label{fig:simple_trees}
\end{figure}

Figure~\ref{fig:simple_trees} shows the canonical Pythagorean embeddings of two very simple trees. (More complicated trees would require more than three dimensions.) The diagram makes clear how these embeddings are inscribed in a unit cube.

\begin{figure}
 \includegraphics[width=\linewidth]{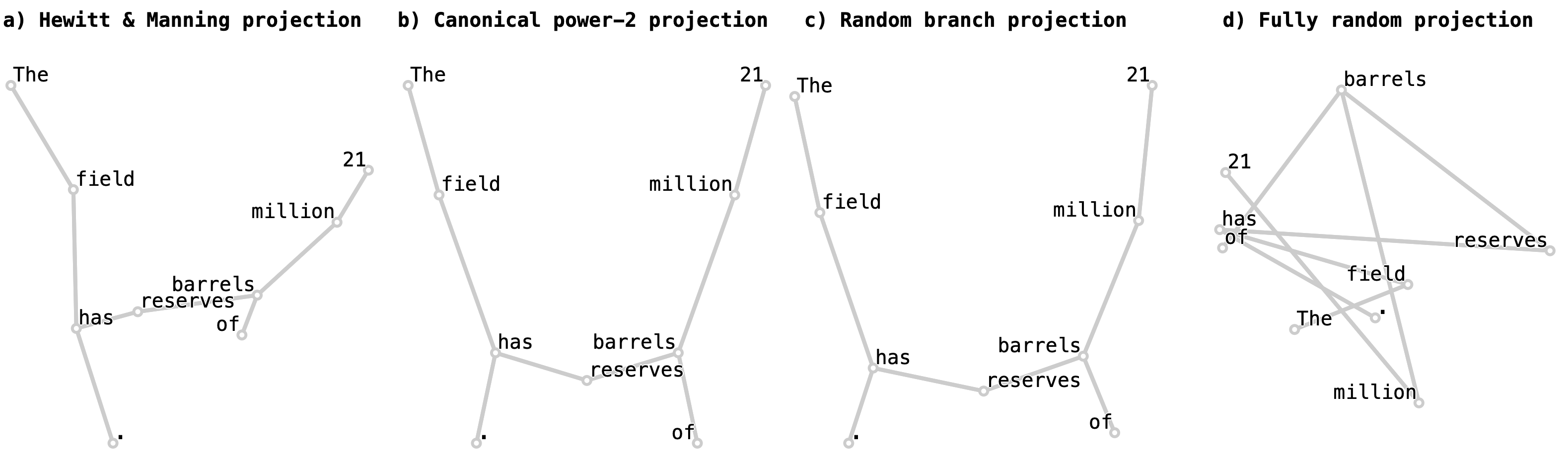}
 \caption{PCA projection of the context embeddings for the sentence ``The field has reserves of 21 million barrels.'' transformed by Hewitt and Manning's ``structural probe'' matrix, compared to the canonical Pythagorean embedding, a random branch embedding, and a completely random embedding.}

 \label{fig:pca_trees}
\end{figure}

Figure~\ref{fig:pca_trees} shows (left) a visualization of a BERT parse tree embedding (as defined by the context embeddings for individual words in a sentence). We compare with PCA projections of the canonical Pythagorean embedding of the same tree structure, as well as a random branch embedding. Finally, we display a completely randomly embedded tree as a control. The visualizations show a clear visual similarity between the BERT embedding and the two mathematical idealizations.

\subsection{Additional BERT parse tree visualizations}

Figure~\ref{fig:many_trees} shows four additional examples of PCA projections of BERT parse tree embeddings.

\begin{figure}
\centering
  \includegraphics[width=\linewidth]{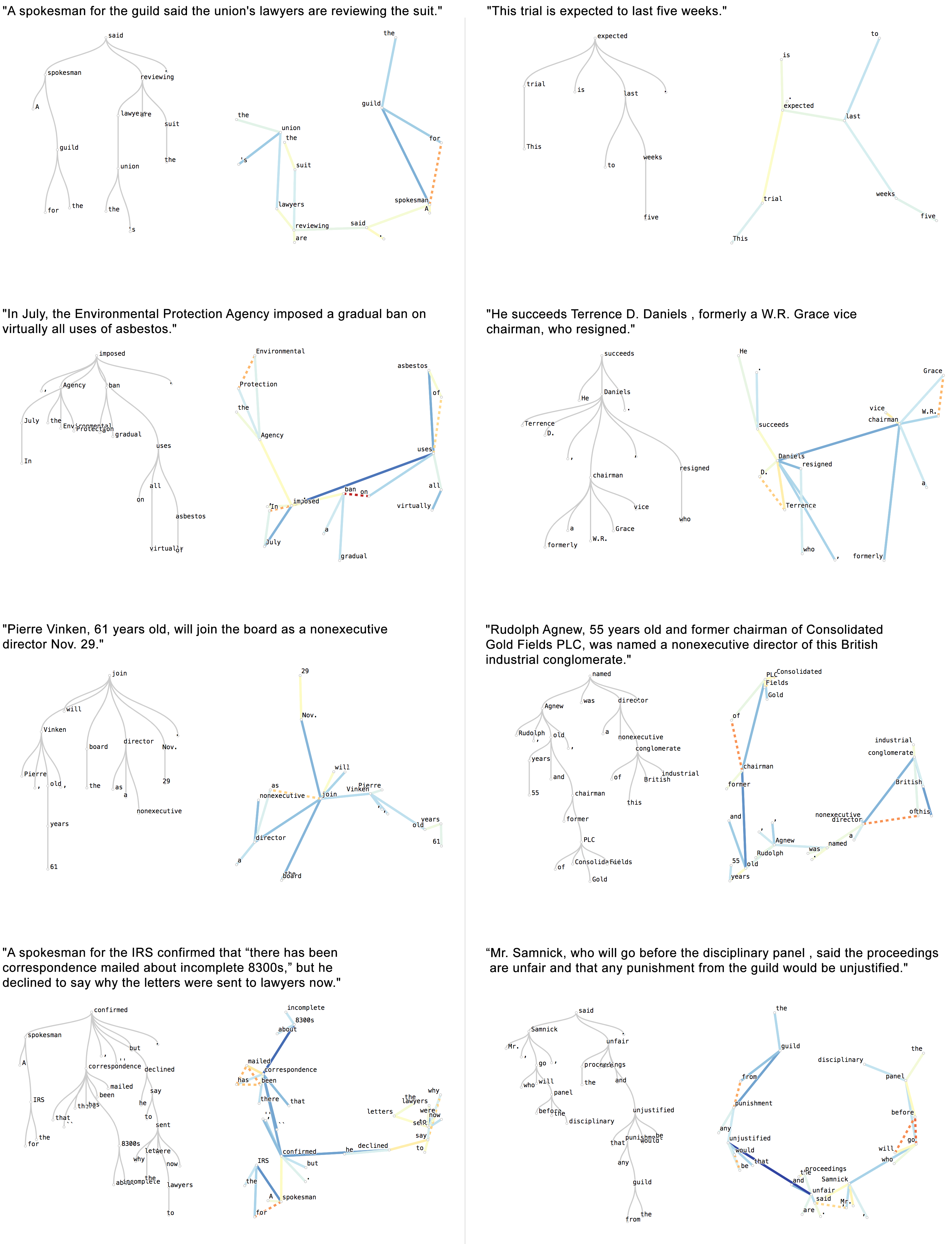}
  \caption{Additional examples of BERT parse trees. In each pair, at left is a drawing of the abstract tree; at right is a PCA view of the embeddings. Colors are the same as in Figure~\ref{fig:pca_trees}.}
  \label{fig:many_trees}
\end{figure}

\subsection{Additional word sense visualizations}
\label{appendix:word_senses}

We provide two additional examples of word sense visualizations, hand-annotated to show key clusters. See Figure~\ref{fig:lie_word_senses} and Figure~\ref{fig:fair_word_senses}.

\begin{figure}
\centering
  \includegraphics[width=\linewidth]{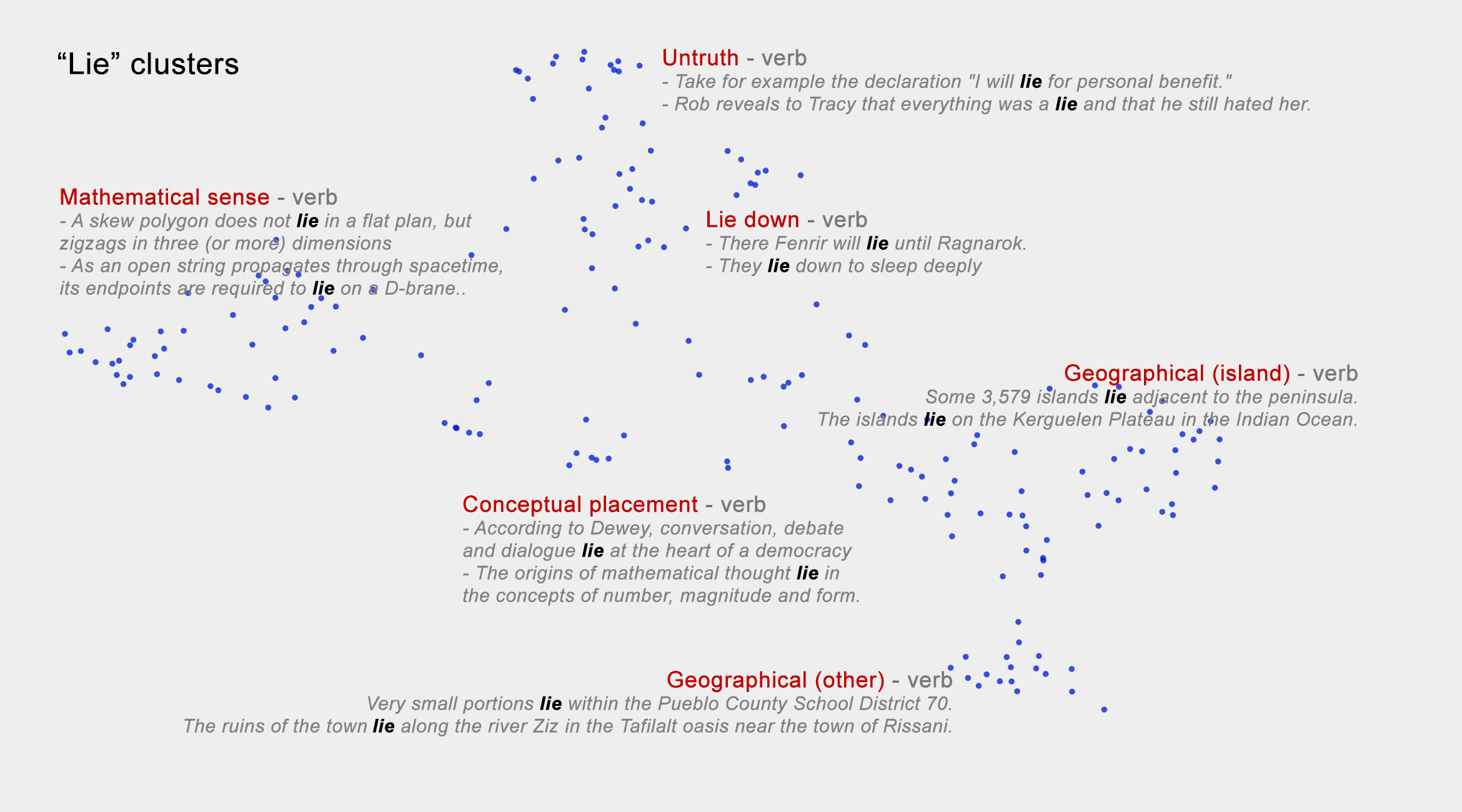}
  \caption{Context embeddings for ``lie'' as used in different sentences.} 
  \label{fig:lie_word_senses}
\end{figure}

\begin{figure}
\centering
  \includegraphics[width=\linewidth]{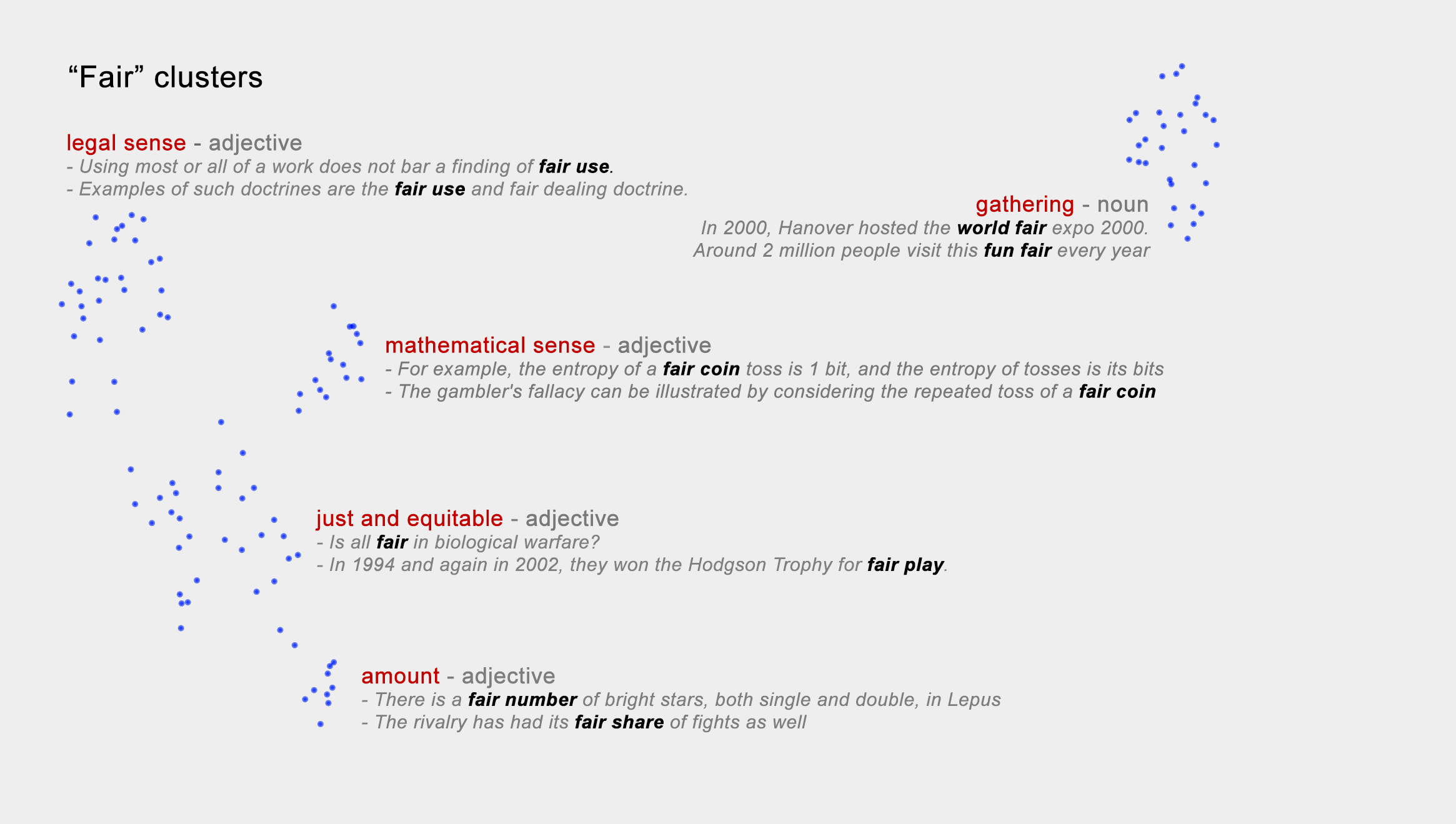}
  \caption{Context embeddings for ``lie'' as used in different sentences.} 
  \label{fig:fair_word_senses}
\end{figure}



\clearpage
\subsection{Dependency relation performance}

\begin{table}[!htb]
\centering
\begin{tabular}{l l l r}
\toprule
Dependency & precision & recall & n \\ 
\midrule
advcl & 0.34 & 0.08 & 1381 \\
advmod & 0.32 & 0.32 & 6653 \\
amod & 0.68 & 0.48 & 10830 \\
aux & 0.64 & 0.08 & 6914 \\
auxpass & 0.68 & 0.50 & 1501 \\
cc & 0.84 & 0.77 & 5041 \\
ccomp & 0.67 & 0.78 & 2792 \\
conj & 0.64 & 0.85 & 5146 \\
cop & 0.49 & 0.16 & 2053 \\
det & 0.81 & 0.95 & 15322 \\
dobj & 0.74 & 0.66 & 7957 \\
mark & 0.58 & 0.67 & 2160 \\
neg & 0.83 & 0.17 & 1265 \\
nn & 0.67 & 0.82 & 11650 \\
npadvmod & 0.53 & 0.23 & 580 \\
nsubj & 0.72 & 0.83 & 14084 \\
nsubjpass & 0.30 & 0.14 & 1255 \\
num & 0.82 & 0.55 & 3464 \\
number & 0.77 & 0.74 & 1182 \\
pcomp & 0.14 & 0.01 & 957 \\
pobj & 0.78 & 0.97 & 17146 \\
poss & 0.74 & 0.54 & 3567 \\
possessive & 0.83 & 0.86 & 1449 \\
prep & 0.79 & 0.92 & 17797 \\
prt & 0.67 & 0.33 & 593 \\
rcmod & 0.55 & 0.30 & 1516 \\
tmod & 0.55 & 0.15 & 672 \\
vmod & 0.84 & 0.07 & 1705 \\
xcomp & 0.72 & 0.40 & 2203 \\
\midrule
\textbf{all} & \textbf{0.72} & \textbf{0.72} & \textbf{150000}
\end{tabular}
\caption{Per-dependency results of multiclass linear classifier trained on attention vectors, with 300,000 training examples and 150,000 test examples. }
\label{tab:dependency-classifier}
\end{table}

\clearpage
\subsection{Semantic probe performance across layers}
\begin{figure}[!htb]
\centering
  \includegraphics[width=0.5\linewidth]{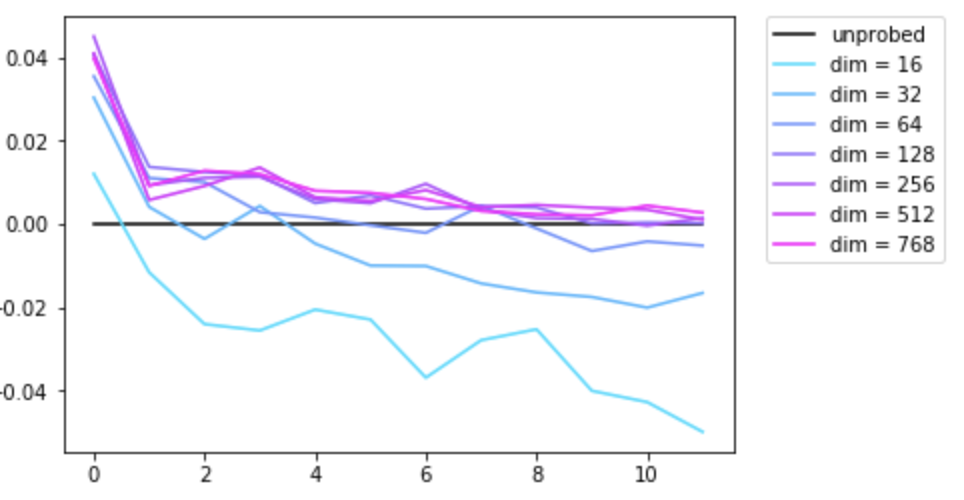}
  \caption{Change in classification accuracy by layer for different probe dimensionalities.}
  \label{fig:semantic_probe_accuracy}
\end{figure}

\subsection{Semantic vs. syntactic probes}
\label{appendix:compare_probes}

We compared our word sense disambiguation probe ($A$) to Hewitt and Manning's syntax probe ($B$). We find that the singular values of $A^T * B$ fall to zero more quickly than those for $A$ or $B$ alone:

\begin{figure}[!htb]
\centering
  \includegraphics[width=0.5\linewidth]{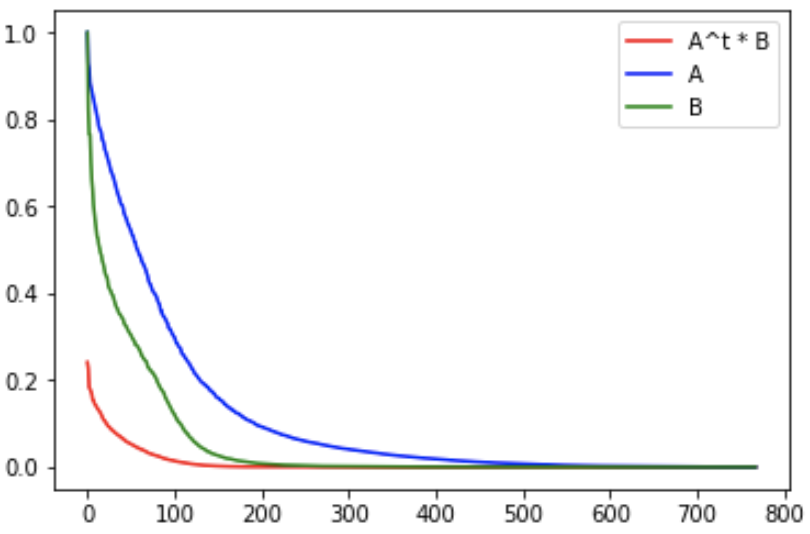}
  \label{fig:a_t_b_svd}
\end{figure}

The same is true for the singular values of $A * B^T$:

\begin{figure}[!htb]
\centering
  \includegraphics[width=0.5\linewidth]{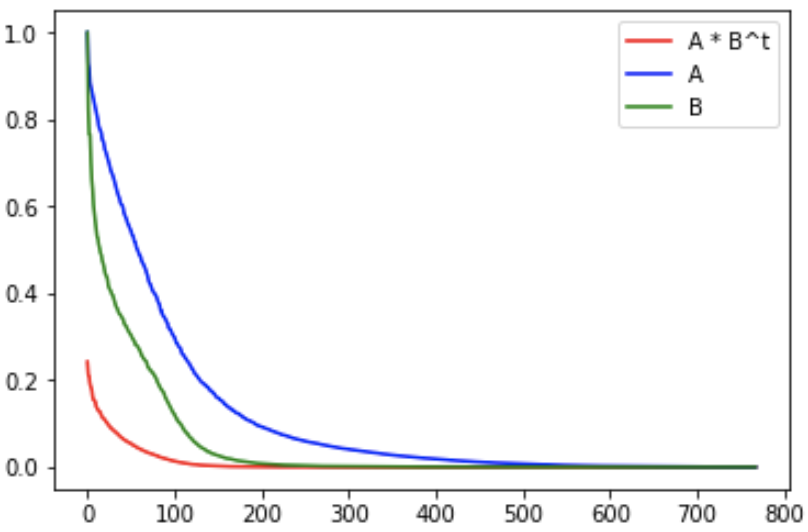}
  \label{fig:a_b_t_svd}
\end{figure}

This suggests that $A$ and $B$ are orthogonal to each other.

\end{document}